\renewcommand{\cite}{\citep}
\title[PAC-Bayes-Bernstein Inequality for Martingales and its Application]{PAC-Bayes-Bernstein Inequality for Martingales\\and its Application to Multiarmed Bandits}
\author{\Name{Yevgeny Seldin} \Email{seldin@tuebingen.mpg.de}\\
\addr Max Planck Institute for Intelligent Systems, T\"{u}bingen, Germany
\AND
\Name{Nicol{\`o} Cesa-Bianchi} \Email{nicolo.cesa-bianchi@unimi.it}\\
\addr Dipartimento di Scienze dell''Informazione, Universit{\`a} degli Studi di Milano, Italy
\AND
\Name{Peter Auer} \Email{auer@unileoben.ac.at}\\
\addr Chair for Information Technology, University of Leoben, Austria
\AND
\Name{Fran\c{c}ois Laviolette} \Email{francois.laviolette@ift.ulaval.ca}\\
\addr Universit\'{e} Laval, Qu\'{e}bec, Canada
\AND
\Name{John Shawe-Taylor} \Email{jst@cs.ucl.ac.uk}\\
\addr University College London, UK
}
\begin{document} 

\maketitle

\begin{abstract}
We develop a new tool for data-dependent analysis of the exploration-exploitation trade-off in learning under limited feedback. Our tool is based on two main ingredients. The first ingredient is a new concentration inequality that makes it possible to control the concentration of weighted averages of multiple (possibly uncountably many) simultaneously evolving and interdependent martingales.\footnote{See also our follow-up work on PAC-Bayesian inequalities for martingales \cite{SLCB+11}} The second ingredient is an application of this inequality to the exploration-exploitation trade-off via importance weighted sampling. We apply the new tool to the stochastic multiarmed bandit problem, however, the main importance of this paper is the development and understanding of the new tool rather than improvement of existing algorithms for stochastic multiarmed bandits. In the follow-up work we demonstrate that the new tool can improve over state-of-the-art in structurally richer problems, such as stochastic multiarmed bandits with side information \cite{SAL+11}.
\end{abstract}
\begin{keywords}PAC-Bayesian Analysis, Bernstein's Inequality, Martingales, Multiarmed Bandits, Model Order Selection, Exploration-Exploitation Trade-off
\end{keywords}

\section{Introduction}

Learning under limited feedback and the exploration-exploitation trade-off are the fundamental questions in fields like reinforcement and active learning. The existing theoretical analysis of the exploration-exploitation trade-off in problems that go beyond multiarmed bandits is mainly focused on the worst-case scenarios \cite{SLL09, JOA10, BLL+11, BDL09}. But the worst-case analysis is overly pessimistic if the environment is not adversarial and cannot exploit the opportunities provided by benign conditions. We present a new analysis framework that lays the foundation for data-dependent analysis of the exploration-exploitation trade-off.

Our framework is based on PAC-Bayesian analysis. The PAC-Bayesian analysis was introduced over a decade ago \cite{STW97,ST+98,McA98,See02} and has since made a significant contribution to the analysis and development of supervised learning methods. PAC-Bayesian bounds provide an explicit and often intuitive and easy-to-optimize trade-off between model complexity and empirical data fit, where the complexity can be nailed down to the resolution of individual hypotheses via the definition of the prior. The PAC-Bayesian analysis was applied to derive generalization bounds and new algorithms for linear classifiers and maximum margin methods \cite{LST02, McA03b, GLLM09}, structured prediction \cite{McA07}, and clustering-based classification models \cite{ST10}, to name just a few. However, the application of PAC-Bayesian analysis beyond the supervised learning domain remained surprisingly limited. In fact, the only additional domain known to us is density estimation \cite{ST10, HST10}.

Application of PAC-Bayesian analysis to non-i.i.d. data was partially addressed only recently by \citet{RSS10} and \citet{LLST10}. The solution of Ralaivola et al. is based on breaking the sample into independent (or almost independent) subsets (which also reduces the effective sample size to the number of independent subsets). Such an approach is inapplicable in reinforcement learning due to strong dependence of the learning process on all of its history. Lever et al. treated dependent samples in the context of analysis of U-statistics. They employed Hoeffding's canonical decomposition of U-statistics into forward martingales and applied PAC-Bayesian analysis directly to these martingales. The approach presented here is both tighter and more general.

We present a generalization of PAC-Bayesian analysis to martingales. Our generalization makes it possible to consider model order selection simultaneously with the exploration-exploitation trade-off. Some potential advantages of applying PAC-Bayesian analysis in reinforcement learning were recently pointed out by several researchers, including \citet{TP10} and \citet{FP10}. Tishby and Polani suggested to use the mutual information between states and actions in a policy as a natural regularizer in reinforcement learning. They showed that regularization by mutual information can be incorporated into Bellman equations and thereby computed efficiently. Tishby and Polani conjectured that PAC-Bayesian analysis can be applied to justify such a regularization and provide generalization guarantees for it.

Fard and Pineau derived a PAC-Bayesian analysis of batch reinforcement learning. However, batch reinforcement learning does not involve the exploration-exploitation trade-off.

One of the reasons for the difficulty of applying PAC-Bayesian analysis to address the exploration-exploitation trade-off is limited feedback (the fact that we only observe the reward for the action taken, but not for all other actions). In supervised learning (and also in density estimation) the empirical error of each hypothesis in a hypotheses class can be evaluated on all the samples and, therefore, the size of the sample available for evaluation of all the hypotheses is the same (and usually relatively large). In the situation of limited feedback the samples from one action cannot be used to evaluate another action and the sample size of ``bad'' actions has to increase sublinearly in the number of game rounds. In a precursory report \cite{SLST+11} we overcame this difficulty by applying PAC-Bayesian analysis to importance weighted sampling \cite{SB98}. Importance weighted sampling is commonly used in the analysis of non-stochastic bandits \cite{ACB+02}, but has not previously been applied to the analysis of stochastic bandits.

The usage of importance weighted sampling introduces two new difficulties. One is sequential dependence of the samples: the rewards observed in the past influence distribution over actions played in the future and through this distribution the variance of the subsequent weighted sample variables. The second problem introduced by weighted sampling is the growing variance of the weighted sample variables. In \citet{SLST+11} we handled this dependence by combining PAC-Bayesian analysis with Hoeffding-Azuma-type inequalities for martingales. The bounds achieved by such a combination provide $O(\frac{1}{\varepsilon_t \sqrt{t}})$ convergence rate, where $t$ is the time step and  $\varepsilon_t$ is the minimal probability of sampling any action at time step $t$. The combination with Bernstein-type inequality for martingales presented here achieves $O(\frac{1}{\sqrt{\varepsilon_t t}})$ convergence rate. This improvement makes it possible to tighten the regret bounds from $O(K^{1/2} t^{3/4})$ to $O(K^{1/3} t^{2/3})$, where $K$ is the number of arms. In Section \ref{sec:exp} we suggest possible ways to tighten the analysis further to get $O(\sqrt{Kt})$ regret bounds. These further improvements will be studied in detail in future work.

We repeat that our main goal is not improvement of existing bounds for stochastic multiarmed bandits, which are already tight up to $\sqrt{\ln(K)}$ factors \cite{AB09,AO10}, but rather development of a new powerful tool for reinforcement learning and for other domains with richer structure. The multiarmed bandits serve us as a testbed for the development of this new tool. One example of a problem with a richer structure are multiarmed bandits with side information (a.k.a. contextual bandits). \citet{BLL+11} suggested $O\left(\sqrt{Kt \ln(N/\delta)}\right)$ and $O\left(\sqrt{t(d\ln t - \ln \delta)}\right)$ regret bounds for learning with expert advice in multiarmed bandits with side information, where $N$ is the number of experts (in case it is finite) and $d$ is the VC-dimension of the set of experts (in case it is infinite). In the follow-up paper \citet{SAL+11} we show that PAC-Bayesian analysis makes it possible to replace $\ln(N)$ and $d$ factors with $KL(\rho\|\mu)$, where $KL$ is the KL-divergence, $\rho(h)$ is a distribution over the experts played by the algorithm, and $\mu(h)$ is a prior distribution over the experts. Such an approach is much more flexible, since it allows individual treatment of different experts (or policies) via the definition of the prior $\mu$.

The paper is organized as follows: Section \ref{sec:main} surveys the main results of the paper, Section \ref{sec:exp} suggests possible ways to tighten the analysis further, and Section \ref{sec:dis} discusses the results. Proofs are provided in the appendix.

\section{Main Results}
\label{sec:main}

We start with a general concentration result for martingales based on combination of PAC-Bayesian analysis with a Bernstein-type inequality for martingales. Then, we apply this result to derive an instantaneous (per-round) bound on the distance between expected and empirical regret for the multiarmed bandit problem. This result is in turn applied to derive an instantaneous regret bound for the multiarmed bandits.

\subsection{PAC-Bayes-Bernstein Inequality for Martingales}

In order to present our concentration result for martingales we need a few definitions. Let ${\cal H}$ be an index (or a hypothesis) space, possibly uncountably infinite. Let $\{X_1(h), X_2(h), \dots: h \in {\cal H}\}$ be martingale difference sequences, meaning that $\mathbb E [X_t(h)|{\cal T}_{t-1}] = 0$, where ${\cal T}_t = \{X_\tau(h):1\leq \tau \leq t \text{ and } h \in {\cal H}\}$ is a set of martingale differences observed up to time $t$ (the history). ($\{X_t(h)\}_{h \in {\cal H}}$ do not have to be independent, we only need the requirement on the conditional expectation to be satisfied.) Let $M_t(h) = \sum_{\tau = 1}^t X_\tau(h)$ be martingales corresponding to the martingale difference sequences and let $V_t(h) = \sum_{\tau=1}^t \mathbb E [X_\tau(h)^2|{\cal T}_{\tau-1}]$ be cumulative variances of the martingales. For a distribution $\rho$ over ${\cal H}$ define weighted averages of the martingales and their cumulative variances with respect to $\rho$ as $M_t(\rho) = \mathbb E_{\rho(h)} [M_t(h)]$ and $V_t(\rho) = \mathbb E_{\rho(h)} [V_t(h)]$.

\begin{theorem}[PAC-Bayes-Bernstein Inequality]
\label{thm:PAC-Bayes-Bernstein}
Let $\{C_1,C_2,\dots\}$ be an increasing sequence set in advance, such that $|X_t(h)| \leq C_t$ for all $h$ with probability 1. Let $\{\mu_1, \mu_2,\dots\}$ be a sequence of ``reference'' $($``prior''$)$ distributions over ${\cal H}$, such that $\mu_t$ is independent of ${\cal T}_t$ $($but can depend on $t$$)$. Let $\{\lambda_1, \lambda_2, \dots\}$ be a sequence of positive numbers set in advance that satisfy$:$
\begin{equation}
\label{eq:technical}
\lambda_t \leq \frac{1}{C_t}.
\end{equation}
Then for all possible distributions $\rho_t$ over ${\cal H}$ given $t$ and for all $t$ simultaneously with probability greater than $1-\delta$$:$
\begin{equation}
|M_t(\rho_t)| \leq \frac{KL(\rho_t\|\mu_t) + 2 \ln(t+1) + \ln \frac{2}{\delta}}{\lambda_t} + (e-2) \lambda_t V_t(\rho_t).
\label{eq:PAC-Bayes-Bernstein}
\end{equation}
\end{theorem}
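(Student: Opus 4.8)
The plan is to follow the standard PAC-Bayesian recipe—construct an exponential supermartingale for each individual hypothesis, apply a change-of-measure (Donsker–Varadhan) step to pass from the prior $\mu_t$ to the arbitrary posterior $\rho_t$, and finally control the resulting random quantity in expectation. Let me sketch this.

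\textbf{Step 1: An exponential supermartingale for each $h$.} For a fixed $h$, consider the process $Z_t(h) = \exp\bigl(\lambda_t M_t(h) - (e-2)\lambda_t^2 V_t(h)\bigr)$. The key analytic lemma I would establish first is the elementary Bernstein-type bound $e^x \le 1 + x + (e-2)x^2$ valid for $x \le 1$. Since $|X_\tau(h)| \le C_\tau$ and $\lambda_\tau \le 1/C_\tau$ by the technical condition \eqref{eq:technical}, we have $|\lambda_\tau X_\tau(h)| \le 1$, so I can apply this bound to $x = \lambda_\tau X_\tau(h)$. Taking the conditional expectation given ${\cal T}_{\tau-1}$ and using $\mathbb E[X_\tau(h)\mid{\cal T}_{\tau-1}]=0$ together with $\mathbb E[X_\tau(h)^2\mid{\cal T}_{\tau-1}]$ appearing in $V_t$, this yields $\mathbb E[e^{\lambda_\tau X_\tau(h)}\mid{\cal T}_{\tau-1}] \le e^{(e-2)\lambda_\tau^2 \mathbb E[X_\tau(h)^2\mid{\cal T}_{\tau-1}]}$, which shows (using a fixed $\lambda$ across the telescoping) that $Z_t(h)$ is a supermartingale with $\mathbb E[Z_t(h)]\le 1$.

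\textbf{Step 2: Change of measure and Fubini.} Define $\xi_t = \mathbb E_{\mu_t(h)}[Z_t(h)]$. Because $\mu_t$ is independent of ${\cal T}_t$, I can interchange the expectation over $\mu_t$ with the expectation over the history (Fubini), giving $\mathbb E[\xi_t] \le 1$. The Donsker–Varadhan variational equality then converts the integral against the posterior into a KL term: for any $\rho_t$,
\begin{equation*}
\lambda_t M_t(\rho_t) - (e-2)\lambda_t^2 V_t(\rho_t) \le \mathbb E_{\rho_t}\bigl[\lambda_t M_t(h) - (e-2)\lambda_t^2 V_t(h)\bigr] \le KL(\rho_t\|\mu_t) + \ln \xi_t,
\end{equation*}
where the first inequality uses Jensen (convexity) to push the expectation inside, and the second is the change-of-measure inequality. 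Rearranging isolates $M_t(\rho_t)$.

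\textbf{Step 3: Uniform-in-time control and the one-sided-to-two-sided step.} To obtain the bound for all $t$ simultaneously I would apply Markov's inequality to $\xi_t$ and take a union bound over $t$ using a weight $\propto 1/(t+1)^2$, whose summability produces the $2\ln(t+1)$ term; choosing the failure budget so the tails sum to $\delta/2$ explains the $\ln(2/\delta)$. This gives $\ln\xi_t \le 2\ln(t+1)+\ln(2/\delta)$ on the good event, and substituting into Step 2 yields the one-sided bound on $M_t(\rho_t)$. To get the absolute value in \eqref{eq:PAC-Bayes-Bernstein}, I would repeat the entire argument with $-X_t(h)$ in place of $X_t(h)$ (the negated sequence is again a martingale difference with the same variance and the same bound $C_t$), spending the other $\delta/2$ of the budget there. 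The main obstacle I anticipate is ensuring the measure-theoretic interchange in Step 2 is fully justified when ${\cal H}$ is uncountable and the martingales are interdependent: the supermartingale property holds pathwise for each $h$, but one must verify that $\xi_t$ itself is a legitimate supermartingale (or at least has expectation $\le 1$) and that the union-bound over the continuum of posteriors $\rho_t$ is handled correctly—here the point is precisely that the prior $\mu_t$ is fixed in advance while $\rho_t$ is arbitrary, so the single inequality $\mathbb E[\xi_t]\le 1$ suffices and no uniform control over $\rho_t$ is needed.
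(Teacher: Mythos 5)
Your proposal is correct and follows essentially the same route as the paper's own proof: an exponential Bernstein-type bound giving $\mathbb E\bigl[e^{\lambda_t M_t(h) - (e-2)\lambda_t^2 V_t(h)}\bigr] \le 1$, the change-of-measure (Donsker--Varadhan) inequality to pass from the fixed prior $\mu_t$ to an arbitrary posterior $\rho_t$, Fubini justified by the independence of $\mu_t$ from ${\cal T}_t$, Markov's inequality with a union bound over $t$ weighted like $1/(t+1)^2$, and negation of the martingale differences to turn the one-sided bound into the two-sided one, each side receiving budget $\delta/2$. The only difference is cosmetic: you prove the Bernstein lemma inline via $e^x \le 1 + x + (e-2)x^2$ for $x \le 1$, whereas the paper states it as a separate lemma and cites the literature for its proof.
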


Bound \eqref{eq:PAC-Bayes-Bernstein} is minimized by $\lambda_t = \sqrt{\frac{KL(\rho_t\|\mu_t) + 2 \ln(t+1) + \ln \frac{2}{\delta}}{(e-2)V_t(\rho_t)}}$. For this value of $\lambda_t$ we would get 
\begin{equation}
\label{eq:dream}
|M_t(\rho_t)| \leq 2 \sqrt{(e-2)V_t(\rho_t) \left(KL(\rho_t\|\mu_t) + 2 \ln(t+1) + \ln \frac{2}{\delta} \right)},
\end{equation}
however, $\lambda_t$ has to be set in advance and cannot depend on the sample. Therefore, we have to make our best guess of what the values of $KL(\rho_t\|\mu_t)$ and $V_t(\rho_t)$ are going to be, which is actually possible in the case that we study below. In the follow-up paper we show that by taking an exponentially spaced grid of $\lambda_t$-s and a union bound over this grid it is possible to derive a bound, which is almost as good as \eqref{eq:dream} \cite{SLCB+11}, but this extension is not required in the current work.

\subsection{Application to the Multiarmed Bandit Problem}

In order to apply our result to the multiarmed bandit problem we need some more definitions. Let ${\cal A}$ be a set of actions (arms) of size $|{\cal A}| = K$ and let $a \in {\cal A}$ denote the actions. Denote by $R(a)$ the expected reward of action $a$. Let $\pi_t$ be a distribution over ${\cal A}$ that is played at round $t$ of the game (a policy). Let $\{A_1,A_2,\dots\}$ be the sequence of actions played independently at random according to $\{\pi_1,\pi_2,\dots\}$ respectively. Let $\{R_1,R_2,\dots\}$ be the sequence of observed rewards. Denote by ${\cal T}_t = \left \{\{\pi_1,\dots,\pi_t\}, \{A_1,\dots,A_t\},\{R_1,\dots,R_t\}\right\}$ the set of played policies, taken actions, and observed rewards up to round $t$.

For $t\geq 1$ and $a \in \{1,\dots,K\}$ define a set of random variables $R_t^a$ (the importance weighted samples):
\[
R_t^a = \left \{ \begin{array}{cl}\frac{1}{\pi_t(a)}R_t,&\mbox{if}~A_t=a\\0,&\mbox{otherwise.}\end{array} \right .
\]
Define: 
\[
\hat R_t(a) = \frac{1}{t} \sum_{\tau=1}^t R_\tau^a.
\]
Observe that $\mathbb E[R^a_t|{\cal T}_{t-1}] = R(a)$ and $\mathbb E \hat R_t(a) = R(a)$.

Let $a^*$ be the ``best'' action (the action with the highest expected reward, if there are multiple ``best'' actions pick any of them). Define the expected and empirical per-round regrets as:
\begin{align*}
\Delta(a) &= R(a^*) - R(a),\\
\hat \Delta_t(a) &= \hat R_t(a^*) - \hat R_t(a).
\end{align*}

Observe that $t (\hat \Delta_t(a) - \Delta(a) )$ form a martingale. Let
\[
V_t(a) = \sum_{\tau=1}^t \mathbb E[([R_\tau^{a^*} - R_\tau^a] - [R(a^*) - R(a)])^2 | {\cal T}_{\tau-1}]
\]
be the cumulative variance of this martingale. 

Let $\{\varepsilon_1, \varepsilon_2, \dots\}$ be a decreasing sequence that satisfies $\varepsilon_t \leq \min_a \pi_t(a)$ (we say that $\pi_t(a)$ is \emph{bounded from below} by $\varepsilon_t$). In the appendix we prove the following upper bound on $V_t(a)$.
\begin{lemma}
\label{lem:V}
For all $t$ and $a$$:$
\[
V_t(a) \leq \frac{2t}{\varepsilon_t}.
\]
\end{lemma}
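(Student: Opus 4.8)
The plan is to bound each per-round conditional variance appearing in the sum that defines $V_t(a)$ by $2/\varepsilon_t$, and then sum over the $t$ rounds. The first step is to discard the centering: since the variance of any random variable is at most its second moment, I would write
\[
\mathbb E\big[([R_\tau^{a^*} - R_\tau^a] - [R(a^*) - R(a)])^2 \,\big|\, {\cal T}_{\tau-1}\big] \le \mathbb E\big[(R_\tau^{a^*} - R_\tau^a)^2 \,\big|\, {\cal T}_{\tau-1}\big].
\]
(The case $a = a^*$ is trivial, as the summand is then identically zero, so I assume $a \neq a^*$.)

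The key structural observation is that at each round exactly one action $A_\tau$ is drawn from $\pi_\tau$, so the indicator events $\{A_\tau = a^*\}$ and $\{A_\tau = a\}$ are disjoint. Consequently $R_\tau^{a^*}$ and $R_\tau^a$ are never simultaneously nonzero, the cross term vanishes, and $(R_\tau^{a^*} - R_\tau^a)^2 = (R_\tau^{a^*})^2 + (R_\tau^a)^2$ pointwise. This reduces the problem to controlling each importance-weighted second moment separately.

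For the second moment of a single weighted sample, I would condition on ${\cal T}_{\tau-1}$ and use the definition of $R_\tau^a$: it equals $R_\tau/\pi_\tau(a)$ on the event $\{A_\tau = a\}$ (of conditional probability $\pi_\tau(a)$) and vanishes otherwise, so
\[
\mathbb E\big[(R_\tau^a)^2 \,\big|\, {\cal T}_{\tau-1}\big] = \frac{1}{\pi_\tau(a)}\,\mathbb E\big[R_\tau^2 \,\big|\, A_\tau = a,\, {\cal T}_{\tau-1}\big] \le \frac{1}{\pi_\tau(a)} \le \frac{1}{\varepsilon_\tau} \le \frac{1}{\varepsilon_t},
\]
where I use that rewards lie in $[0,1]$ (hence $R_\tau^2 \le 1$), the lower bound $\pi_\tau(a) \ge \varepsilon_\tau$, and the monotonicity of the sequence $\{\varepsilon_\tau\}$, which gives $\varepsilon_\tau \ge \varepsilon_t$ for every $\tau \le t$. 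The identical bound holds with $a^*$ in place of $a$. Adding the two contributions, every summand is at most $2/\varepsilon_t$, and summing over $\tau = 1, \dots, t$ gives $V_t(a) \le 2t/\varepsilon_t$.

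The main obstacle is recognizing that the cross term disappears because only a single action is sampled per round; this is what keeps the bound at $2/\varepsilon_t$ per round rather than something quadratic in the weights. The remaining manipulations—bounding variance by the second moment, evaluating the importance-weighted second moment, and invoking the monotonicity of $\varepsilon_t$ to replace $1/\varepsilon_\tau$ by the uniform constant $1/\varepsilon_t$—are routine once that observation is in place.
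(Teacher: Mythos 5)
Your proposal is correct and follows essentially the same route as the paper's proof: drop the centering term, reduce to the importance-weighted second moments of $R_\tau^{a^*}$ and $R_\tau^a$ (the vanishing cross term, which the paper leaves implicit, is exactly what yields its per-round bound $\frac{\pi_\tau(a)}{\pi_\tau(a)^2} + \frac{\pi_\tau(a^*)}{\pi_\tau(a^*)^2}$), and finish with $\pi_\tau(a) \geq \varepsilon_\tau \geq \varepsilon_t$ by monotonicity. Your write-up is if anything slightly more explicit than the paper's, which states the key inequality without spelling out the disjointness of the events $\{A_\tau = a^*\}$ and $\{A_\tau = a\}$.
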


For a distribution $\rho$ over ${\cal A}$ define the expected and empirical regret of $\rho$ as $\Delta(\rho) = \mathbb E_{\rho(a)} [\Delta(a)]$ and $\hat \Delta_t(\rho) = \mathbb E_{\rho(a)} [\hat \Delta_t(a)]$. The following theorem follows immediately from Theorem \ref{thm:PAC-Bayes-Bernstein} and Lemma \ref{lem:V} by taking a uniform prior over the actions.
\begin{theorem}
\label{thm:PAC-Bayes-Delta}
For any sequence of sampling distributions $\{\pi_1,\pi_2,\dots\}$ that are bounded from below by a decreasing sequence $\{\varepsilon_1, \varepsilon_2, \dots\}$ that satisfies
\begin{equation}
\label{eq:technical-epsilon}
\frac{\ln(K) + 2 \ln(t+1) + \ln \frac{2}{\delta}}{2(e-2)t} \leq \varepsilon_t,
\end{equation}
where $\pi_t$ can depend on ${\cal T}_{t-1}$, for all possible distributions $\rho_t$ given $t$ and for all $t \geq 1$ simultaneously with probability greater than $1-\delta$$:$
\begin{equation}
\label{eq:PAC-Bayes-Delta-Bernstein}
\left |\Delta(\rho_t) - \hat \Delta_t(\rho_t) \right | \leq 2\sqrt{\frac{2(e-2)\left(\ln(K) + 2 \ln(t+1) + \ln \frac{2}{\delta} \right)}{t\varepsilon_t}}.
\end{equation}
\end{theorem}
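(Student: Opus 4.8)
The plan is to recognize $t\bigl(\hat \Delta_t(\rho_t) - \Delta(\rho_t)\bigr)$ as a weighted average of martingales of exactly the form handled by Theorem \ref{thm:PAC-Bayes-Bernstein}, and then to feed in the variance bound of Lemma \ref{lem:V} together with a deterministic, pre-committed choice of $\lambda_t$. The algebra is short once the correspondence is set up; the only real content is checking admissibility of $\lambda_t$.

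First I would set up the martingales explicitly. For each action $a$ define the differences $X_\tau(a) = [R_\tau^{a^*} - R_\tau^a] - [R(a^*) - R(a)]$. Since $\mathbb E[R_\tau^a \mid {\cal T}_{\tau-1}] = R(a)$, each $X_\tau(a)$ is a martingale difference with respect to the history ${\cal T}_{\tau-1}$, and summing over $\tau$ gives $M_t(a) = \sum_{\tau=1}^t X_\tau(a) = t\bigl(\hat \Delta_t(a) - \Delta(a)\bigr)$. Taking the $\rho_t$-average yields $M_t(\rho_t) = t\bigl(\hat \Delta_t(\rho_t) - \Delta(\rho_t)\bigr)$, so that $|\Delta(\rho_t) - \hat \Delta_t(\rho_t)| = \frac1t |M_t(\rho_t)|$, while the cumulative variance of this martingale is precisely the $V_t(a)$ defined before Lemma \ref{lem:V}. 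I would also record that $|X_\tau(a)|$ is bounded almost surely by a quantity $C_t$ of order $1/\varepsilon_t$ (because $R_\tau^a \le 1/\pi_\tau(a) \le 1/\varepsilon_t$ and the centering term is bounded), which supplies the constant needed to invoke the theorem.

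Next I would apply Theorem \ref{thm:PAC-Bayes-Bernstein} with the uniform prior $\mu_t$ over ${\cal A}$, which is trivially independent of ${\cal T}_t$ and gives $KL(\rho_t\|\mu_t) \le \ln K$. The delicate point is the choice of $\lambda_t$: the bound \eqref{eq:PAC-Bayes-Bernstein} would be minimized by a data-dependent $\lambda_t$ tied to the realized $V_t(\rho_t)$, but $\lambda_t$ must be fixed in advance. The key idea is to replace $V_t(\rho_t)$ by its deterministic worst-case surrogate $2t/\varepsilon_t$ from Lemma \ref{lem:V} and set $\lambda_t = \sqrt{\frac{\ln K + 2\ln(t+1) + \ln(2/\delta)}{2(e-2)\,t/\varepsilon_t}}$, which depends only on the pre-committed quantities $K, t, \delta, \varepsilon_t$ and is therefore admissible. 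Substituting this $\lambda_t$ into \eqref{eq:PAC-Bayes-Bernstein}, bounding $V_t(\rho_t) \le 2t/\varepsilon_t$, and dividing by $t$ balances the two terms and collapses them into the single square root of \eqref{eq:PAC-Bayes-Delta-Bernstein}; this is the same balancing that produces \eqref{eq:dream}, but performed against the variance surrogate rather than the true variance.

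The main obstacle I anticipate is not the algebra but verifying that this pre-committed $\lambda_t$ actually satisfies the admissibility constraint \eqref{eq:technical}, namely $\lambda_t \le 1/C_t$. Squaring the chosen $\lambda_t$ shows that $\lambda_t \le \varepsilon_t$ holds if and only if $\frac{\ln K + 2\ln(t+1) + \ln(2/\delta)}{2(e-2)t} \le \varepsilon_t$, which is exactly hypothesis \eqref{eq:technical-epsilon}. Hence condition \eqref{eq:technical-epsilon} is precisely engineered to guarantee $\lambda_t \le 1/C_t$, and once it holds the application of Theorem \ref{thm:PAC-Bayes-Bernstein} is legitimate; its simultaneous-over-$t$, high-probability conclusion then transfers directly to \eqref{eq:PAC-Bayes-Delta-Bernstein}.
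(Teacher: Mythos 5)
Your proof is correct and takes essentially the same approach as the paper: uniform prior giving $KL(\rho_t\|\mu_t) \leq \ln K$, the surrogate variance bound $V_t(\rho_t) \leq 2t/\varepsilon_t$ from Lemma \ref{lem:V}, a pre-committed $\lambda_t$ balancing the two terms of \eqref{eq:PAC-Bayes-Bernstein} against that surrogate, and the observation that admissibility $\lambda_t \leq 1/C_t$ with $C_t = 1/\varepsilon_t$ is exactly condition \eqref{eq:technical-epsilon}. One remark: your $\lambda_t = \sqrt{\varepsilon_t\bigl(\ln K + 2\ln(t+1) + \ln\frac{2}{\delta}\bigr)/\bigl(2(e-2)t\bigr)}$ is the choice that actually yields \eqref{eq:PAC-Bayes-Delta-Bernstein} and makes the admissibility check equivalent to \eqref{eq:technical-epsilon}; the paper's proof writes $\lambda_t$ without the $\varepsilon_t$ factor, which appears to be a typo, so your version is the correct one.
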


\begin{proof}
For a uniform prior $\mu_t(a) = \frac{1}{K}$ we have $KL(\rho_t\|\mu_t) \leq \ln(K)$. By Lemma \ref{lem:V}, for any $\rho_t$ the weighted cumulative variance is bounded by $V_t(\rho_t) \leq \frac{2t}{\varepsilon_t}$. By taking $\lambda_t = \sqrt{\frac{\ln(K) + 2 \ln(t+1) + \ln \frac{2}{\delta}}{2(e-2)t}}$ and substituting the bounds on $KL(\rho_t\|\mu_t)$ and $V_t(\rho_t)$ into \eqref{eq:PAC-Bayes-Bernstein} we obtain \eqref{eq:PAC-Bayes-Delta-Bernstein}. (We considered the martingales $t(\Delta(a) - \hat \Delta_t(a))$, which provided a factor of $t$ in the denominator.) The technical condition \eqref{eq:technical-epsilon} follows from the requirement \eqref{eq:technical} on $\lambda_t$.
\end{proof}

{\bf Remarks:} Theorem \ref{thm:PAC-Bayes-Delta} provides an improvement over the corresponding Theorems 2 and 3 in the precursory report \cite{SLST+11} by decreasing the dependence on $\varepsilon_t$ from $1/\varepsilon_t$ to $1/\sqrt{\varepsilon_t}$. This in turn makes it possible to improve the regret bound, which is shown next. Interestingly, the uniform prior $\mu_t$ yields a tighter (and also simpler) bound than a distribution-dependent prior used in \citet{SLST+11}. It also broadens the range of playing strategies for which the regret bound given in Theorem \ref{thm:regret-delta} holds. We note that the uniform prior neutralizes the power of PAC-Bayesian analysis to discriminate between different hypotheses. For problems with richer structure studied in the follow-up paper \cite{SAL+11}, more interesting priors can be defined that yield advantages over alternative approaches. The multiarmed bandit problem studied here is, nevertheless, important for the development of the new tool.

We note that in the next theorem we take $\varepsilon_t = K^{-2/3}t^{-1/3}$ and the technical condition \eqref{eq:technical-epsilon} is satisfied for $t$ that is slightly larger than $K (\ln(K) + \ln \frac{2}{\delta})^{3/2}$.

\begin{theorem}
\label{thm:regret-delta}
Let $\varepsilon_t = K^{-2/3}t^{-1/3}$ and take any $\gamma_t$, such that $\gamma_t \geq K^{-1/3}t^{1/3}\sqrt{\ln K}$. For $t < K$ let $\pi_t(a) = \frac{1}{K}$ for all $a$ and for $t \geq K$ let
\[
\pi_{t+1}(a) = \tilde \rho_t^{_{exp}}(a) = (1 - K \varepsilon_{t+1}) \rho_t^{_{exp}}(a) + \varepsilon_{t+1},
\]
where
\[
\rho_t^{_{exp}}(a) = \frac{1}{Z(\rho_t^{_{exp}})} e^{\gamma_t \hat R_t(a)}
\]
and
\[
Z(\rho_t^{_{exp}}) = \sum_a e^{\gamma_t \hat R_t(a)}.
\]
Then the expected per-round regret $\Delta(\tilde \rho_t^{_{exp}}) = R(a^*) - R(\tilde \rho_t^{_{exp}})$ is bounded by$:$
\[
\Delta(\tilde \rho_t^{_{exp}}) \leq \frac{K^{1/3}}{(t+1)^{1/3}}
\left (1 + \sqrt{\ln K}+ 2\sqrt{2(e-2)\left (\ln(K) + 2 \ln(t+1) + \ln \frac{2}{\delta} \right)} \right )
\]
with probability greater than $1-\delta$ simultaneously for all rounds $t$, where $t$ satisfies \eqref{eq:technical-epsilon} $($which means that $t \geq K \left (\frac{\ln(K) + 2 \ln(t+1) + \ln \frac{2}{\delta}}{2(e-2)} \right )^{3/2}$, note that $t$ also appears on the right hand side$)$. This translates into a total regret of $\tilde O(K^{1/3}t^{2/3})$ $($where $\tilde O$ hides logarithmic factors$)$.
\end{theorem}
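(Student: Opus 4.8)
The plan is to bound $\Delta(\tilde \rho_t^{_{exp}})$ by splitting it into three controllable contributions: the cost of the forced uniform exploration, the empirical regret of the exponential-weights distribution $\rho_t^{_{exp}}$, and the PAC-Bayesian gap between expected and empirical regret supplied by Theorem \ref{thm:PAC-Bayes-Delta}. The first step exploits the mixture form $\tilde \rho_t^{_{exp}} = (1 - K\varepsilon_{t+1})\rho_t^{_{exp}} + \varepsilon_{t+1}$ and the linearity $\Delta(\rho) = \sum_a \rho(a)\Delta(a)$ to write $\Delta(\tilde \rho_t^{_{exp}}) = (1 - K\varepsilon_{t+1})\Delta(\rho_t^{_{exp}}) + \varepsilon_{t+1}\sum_a \Delta(a)$. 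Since $0 \le \Delta(a) \le 1$ and $\Delta(\rho_t^{_{exp}}) \ge 0$, this is at most $\Delta(\rho_t^{_{exp}}) + K\varepsilon_{t+1}$, and with $\varepsilon_{t+1} = K^{-2/3}(t+1)^{-1/3}$ the exploration term equals $\frac{K^{1/3}}{(t+1)^{1/3}}$, the first summand of the claimed bound. The same mixing guarantees $\pi_{t+1}(a) = \tilde \rho_t^{_{exp}}(a) \ge \varepsilon_{t+1}$, so the lower-bound hypothesis of Theorem \ref{thm:PAC-Bayes-Delta} (hence of Lemma \ref{lem:V}) is met; one also checks that $\varepsilon_t = K^{-2/3}t^{-1/3}$ is decreasing and that \eqref{eq:technical-epsilon} holds in the stated range of $t$.

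Next I would control $\Delta(\rho_t^{_{exp}})$ by its empirical counterpart. Because Theorem \ref{thm:PAC-Bayes-Delta} holds simultaneously for all $t$ and for every distribution $\rho_t$ that may depend on ${\cal T}_{t-1}$, I may instantiate it at the data-dependent choice $\rho_t = \rho_t^{_{exp}}$ without any additional union bound, obtaining $\Delta(\rho_t^{_{exp}}) \le \hat \Delta_t(\rho_t^{_{exp}}) + 2\sqrt{2(e-2)(\ln K + 2\ln(t+1) + \ln \frac{2}{\delta})/(t\varepsilon_t)}$. Substituting $\varepsilon_t = K^{-2/3}t^{-1/3}$ gives $1/\sqrt{t\varepsilon_t} = K^{1/3}t^{-1/3}$, which reproduces the third summand.

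The only genuinely new estimate is that the empirical regret of the exponential-weights distribution is small, namely $\hat \Delta_t(\rho_t^{_{exp}}) \le \frac{\ln K}{\gamma_t}$. I would obtain it from the standard log-sum-exp bound. The map $F(\gamma) = \ln \sum_a e^{\gamma \hat R_t(a)}$ is convex, and its derivative $F'(\gamma) = \sum_a \frac{e^{\gamma \hat R_t(a)}}{\sum_b e^{\gamma \hat R_t(b)}}\hat R_t(a)$ is therefore nondecreasing in $\gamma$ and equals $\sum_a \rho_t^{_{exp}}(a)\hat R_t(a)$ at $\gamma = \gamma_t$. Integrating $F'$ from $0$ to $\gamma_t$, bounding the integral by $\gamma_t F'(\gamma_t)$ via monotonicity, and using $F(0) = \ln K$ together with $F(\gamma_t) \ge \gamma_t \max_a \hat R_t(a)$, gives $\max_a \hat R_t(a) - \sum_a \rho_t^{_{exp}}(a)\hat R_t(a) \le \frac{\ln K}{\gamma_t}$. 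Since $\hat R_t(a^*) \le \max_a \hat R_t(a)$, the left-hand side dominates $\hat \Delta_t(\rho_t^{_{exp}})$, and $\gamma_t \ge K^{-1/3}t^{1/3}\sqrt{\ln K}$ converts the bound into $K^{1/3}t^{-1/3}\sqrt{\ln K}$, the second summand.

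Collecting the three pieces gives the stated inequality, up to the routine bookkeeping of gathering the $t^{-1/3}$ factors of the concentration and Gibbs terms under the common prefactor $\frac{K^{1/3}}{(t+1)^{1/3}}$ (the discrepancy between $t$ and $t+1$ being negligible in the admissible regime). I expect the main difficulty to lie not in any single estimate but in the rate balancing that dictates the exponents: the exploration cost scales as $K\varepsilon_t$, the Gibbs bias as $\ln(K)/\gamma_t$, and the PAC-Bayes deviation as $1/\sqrt{t\varepsilon_t}$, and one must choose $\varepsilon_t$ and $\gamma_t$ so that all three are simultaneously of order $K^{1/3}t^{-1/3}$ — which is exactly what the prescribed $\varepsilon_t = K^{-2/3}t^{-1/3}$ and $\gamma_t \sim K^{-1/3}t^{1/3}\sqrt{\ln K}$ accomplish. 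A secondary point requiring care is verifying that the range imposed by \eqref{eq:technical-epsilon} is nonempty and yields the self-referential threshold $t \ge K\big((\ln K + 2\ln(t+1) + \ln \frac{2}{\delta})/(2(e-2))\big)^{3/2}$ quoted in the statement.
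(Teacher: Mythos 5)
Your proposal is correct, and at the structural level it coincides with the paper's proof: the paper uses the same three-term decomposition $\Delta(\tilde \rho_t^{_{exp}}) = [\Delta(\rho_t^{_{exp}}) - \hat \Delta_t(\rho_t^{_{exp}})] + \hat \Delta_t(\rho_t^{_{exp}}) + [R(\rho_t^{_{exp}}) - R(\tilde \rho_t^{_{exp}})]$ (your mixture rearrangement is this identity written additively), bounds the first term by instantiating Theorem \ref{thm:PAC-Bayes-Delta} at the data-dependent posterior $\rho_t^{_{exp}}$ exactly as you do, and bounds the exploration term by $K\varepsilon_{t+1}$ (the paper via an $\ell_1$-distance argument, Lemma \ref{lem:eps}; you via direct algebra on the mixture --- equivalent). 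Where you genuinely diverge is the key middle estimate $\hat \Delta_t(\rho_t^{_{exp}}) \leq \ln(K)/\gamma_t$. The paper reduces it to a standalone inequality (Lemma \ref{lem:expsum}): for $x_1 = 0$, the ratio $\bigl(\sum_i x_i e^{-\alpha x_i}\bigr)/\bigl(\sum_j e^{-\alpha x_j}\bigr)$ is at most $\ln(n)/\alpha$, proven by discarding negative $x_i$'s, asserting by symmetry that the maximum occurs when all nonzero $x_i$ coincide, and then a change of variables plus a Taylor bound on $\ln z$. You instead run the classical log-partition-function argument: $F(\gamma) = \ln \sum_a e^{\gamma \hat R_t(a)}$ is convex, so $F(\gamma_t) - F(0) \leq \gamma_t F'(\gamma_t)$, and combining $F(0) = \ln K$, $F(\gamma_t) \geq \gamma_t \max_a \hat R_t(a)$, and $F'(\gamma_t) = \mathbb E_{\rho_t^{_{exp}}}[\hat R_t]$ gives the bound. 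Your route buys two things: it compares against $\max_a \hat R_t(a)$ rather than $\hat R_t(a^*)$, so the possibility of negative $\hat \Delta_t(a)$ never needs separate handling, and it replaces the paper's informal ``due to symmetry the maximum is achieved at equal $x_i$'s'' step --- the one assertion in the paper's argument that is stated rather than proven --- with a one-line convexity fact. Both treatments share the same minor imprecisions: the $t$ versus $t+1$ slop in the Gibbs and PAC-Bayes terms (which you flag explicitly, the paper does not), and the implicit verification that the uniform rounds $\tau < K$, where $\pi_\tau(a) = 1/K$ is actually smaller than $\varepsilon_\tau$, still satisfy $\pi_\tau(a) \geq \varepsilon_t$ for the final $t \geq K$, which is all that Lemma \ref{lem:V} requires.
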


For $\gamma_t = \varepsilon_t^{-1}$ the playing strategy in Theorem \ref{thm:regret-delta} is known as the EXP3 algorithm for adversarial bandits \cite{ACB+02}, which is applied here to stochastic bandits. When $\gamma_t$ tends to infinity, we obtain the $\varepsilon$-greedy algorithm for stochastic bandits \cite{ACBF02}. Theorem \ref{thm:regret-delta} covers the spectrum of all possible intermediate strategies.

\section{Towards a Tighter Regret Bound}
\label{sec:exp}

We note that there is still a room for improvement, which we believe will enable to achieve regret bounds of order $\tilde O(\sqrt{Kt})$. The main source of looseness is the usage of the crude global upper bound $\frac{2t}{\varepsilon_t}$ on the cumulative variances in Lemma \ref{lem:V} that holds for any distribution $\rho_t$. While this bound seems to be tight for the $\varepsilon$-greedy strategy, we believe that it can be tightened for the EXP3 algorithm. It is possible to show that if we play according to the distributions $\{\tilde \rho_1^{_{exp}}, \dots, \tilde \rho_t^{_{exp}}\}$, then for ``good'' actions $a$ (those for which $\Delta(a) \leq \frac{1}{\gamma_t}$) the cumulative variance $V_t(a)$ is bounded by $C K t$ for some constant $C$. If we could show that for ``bad'' actions $a$ (those for which $\Delta(a) > \frac{1}{\gamma_t}$) the probability $\rho_t^{_{exp}}$ of picking such actions is bounded by $C \varepsilon_t$, then the cumulative variance $V_t(\rho_t^{_{exp}})$ would be bounded by $C K t$. This is, in fact, true for ``very bad'' actions (those, for which $\Delta(a)$ is close to 1), but it does not hold for actions with $\Delta(a)$ close to $\frac{1}{\gamma_t}$. However, we can possibly show that for such actions $\rho_t^{_{exp}}(a) \leq C \varepsilon_t$ for most of the rounds ($1 - \varepsilon_t$ fraction will suffice) and then we will be able to achieve $\tilde O(\sqrt{Kt})$ regret. In the experiment that follows we provide an empirical evidence that this conjecture holds in practice.

Another possible approach is to apply the EXP3.P algorithm of \citet{ACB+02}. However, in the experiment that follows we show that in the stochastic setting EXP3 algorithm achieves much lower regret than EXP3.P. It is, therefore, worth exploring the first route. We also note that \citet{ACB+02} do not provide an explicit bound on the variance of EXP3.P, which is required for our bound. This would have to be done for the second way of achieving $\tilde O(\sqrt{Kt})$ regret bound.

\subsection{Empirical Test Study}

In the following experiment we show that in the stochastic setting EXP3 algorithm achieves lower regret compared to EXP3.P.1 algorithm of \citet{ACBF02}. We also show that the variance of EXP3 algorithm is reasonably close to $2Kt$. Finally, we show that in the stochastic setting the regret of EXP3 algorithm is comparable or even lower than the regret of UCB strategy \cite{ACBF02} in the short run, but gets worse in the long run. We note that UCB strategy is not compatible with PAC-Bayesian analysis, since in UCB every action has its own sample size and the sample size of ``bad'' actions grows sublinearly with the number of game rounds. Designing a strategy that would be compatible with PAC-Bayesian analysis and achieve the regret of UCB in the long run is an important direction for future research.

\subsubsection*{Experiment Setup}

We took a 2-arm bandit problem with biases 0.5 and 0.6 for the two arms and ran EXP3 algorithm from Theorem \ref{thm:regret-delta} with $\varepsilon_t = 1 / \sqrt{Kt}$ and $\gamma_t = \sqrt{t \ln K / K}$, EXP3.P.1 algorithm of \citet{ACB+02} with $\delta = 0.001$, and UCB1 algorithm of \citet{ACBF02}. In the first experiment we made 1000 repetitions of the game and in each game we ran each of the algorithms for 10,000 rounds. In the second experiment we made 100 repetitions of the game and in each game we ran each of the algorithms for $10^7$ rounds. In Figure \ref{fig:exp} we show:
\begin{itemize}
  \item[\ref{fig:exp}.a] Experiment 1 ($10^4$ rounds): Average (over 1000 repetitions of the game) cumulative regret of EXP3, EXP3.P.1, and UCB1 algorithms. 
	\item[\ref{fig:exp}.b] Experiment 1: Average cumulative variance of EXP3 and EXP3.P.1 normalized by $2 K t$, which is what we would like it to be: $\frac{1}{2 K t} \cdot \frac{1}{1000} \sum_{i=1}^{1000} V_t^i(\rho_t)$, where $i \in [1,\dots,1000]$ indexes the experiments.
  \item[\ref{fig:exp}.c] Experiment 2 ($10^7$ rounds): Average (over 100 repetitions of the game) cumulative regret of EXP3 and UCB1 algorithms. The regret of EXP3.P.1 algorithm was far above the regret of EXP3 and UCB1 and, therefore, was omitted from the graphs.
	\item[\ref{fig:exp}.d] Experiment 2: Average cumulative variance of EXP3 normalized by $2 K t$.
\end{itemize}

\begin{figure}[t]
\subfigure[Cumulative Regret, $10^4$ rounds]{\includegraphics[width=.49\columnwidth]{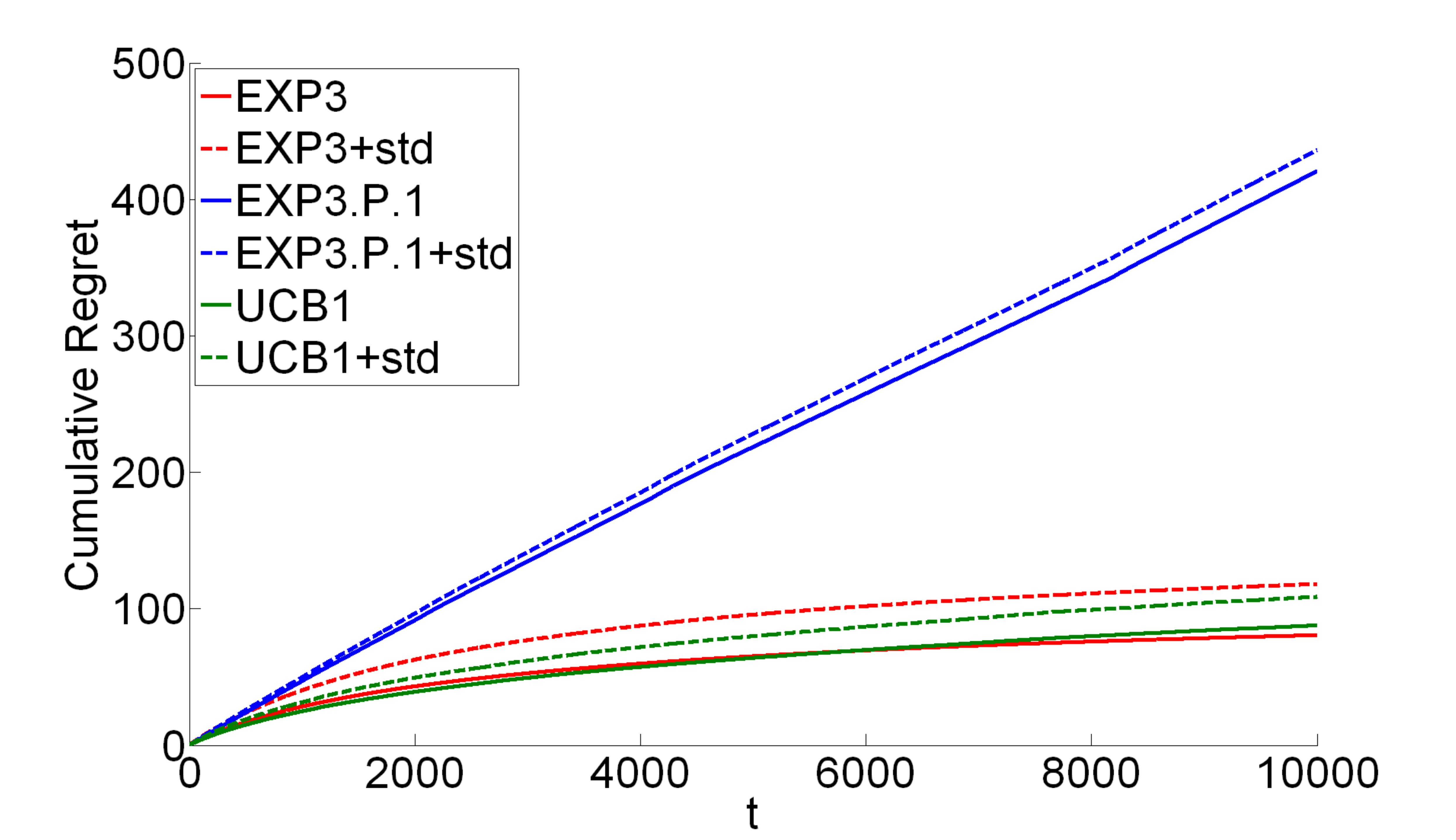}}
\quad
\subfigure[$\frac{1}{2Kt} ~\cdot$ (Cumulative Variance), $10^4$ rounds]{\includegraphics[width=.49\columnwidth]{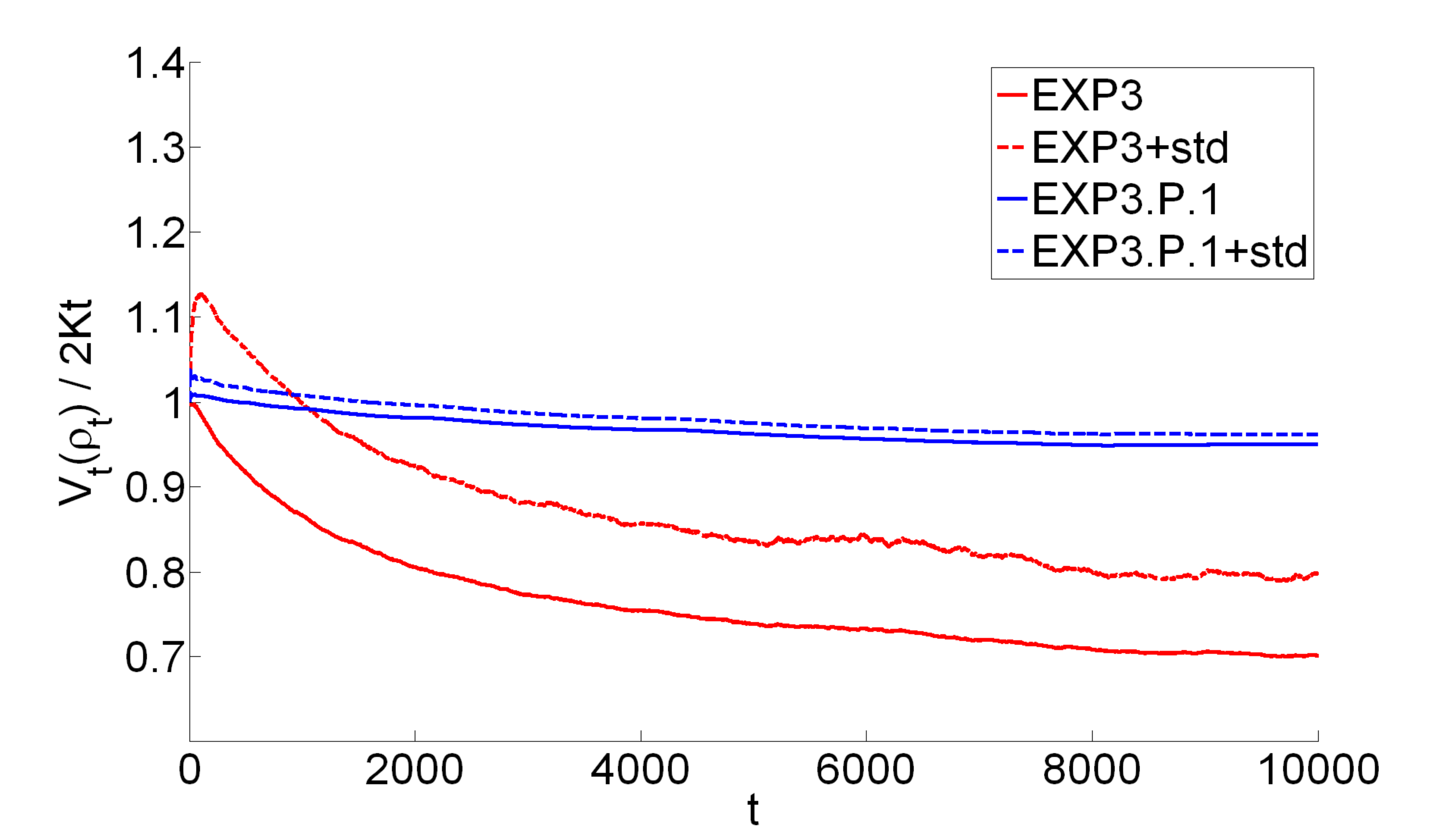}}\\%
\subfigure[Cumulative Regret, $10^7$ rounds]{\includegraphics[width=.49\columnwidth]{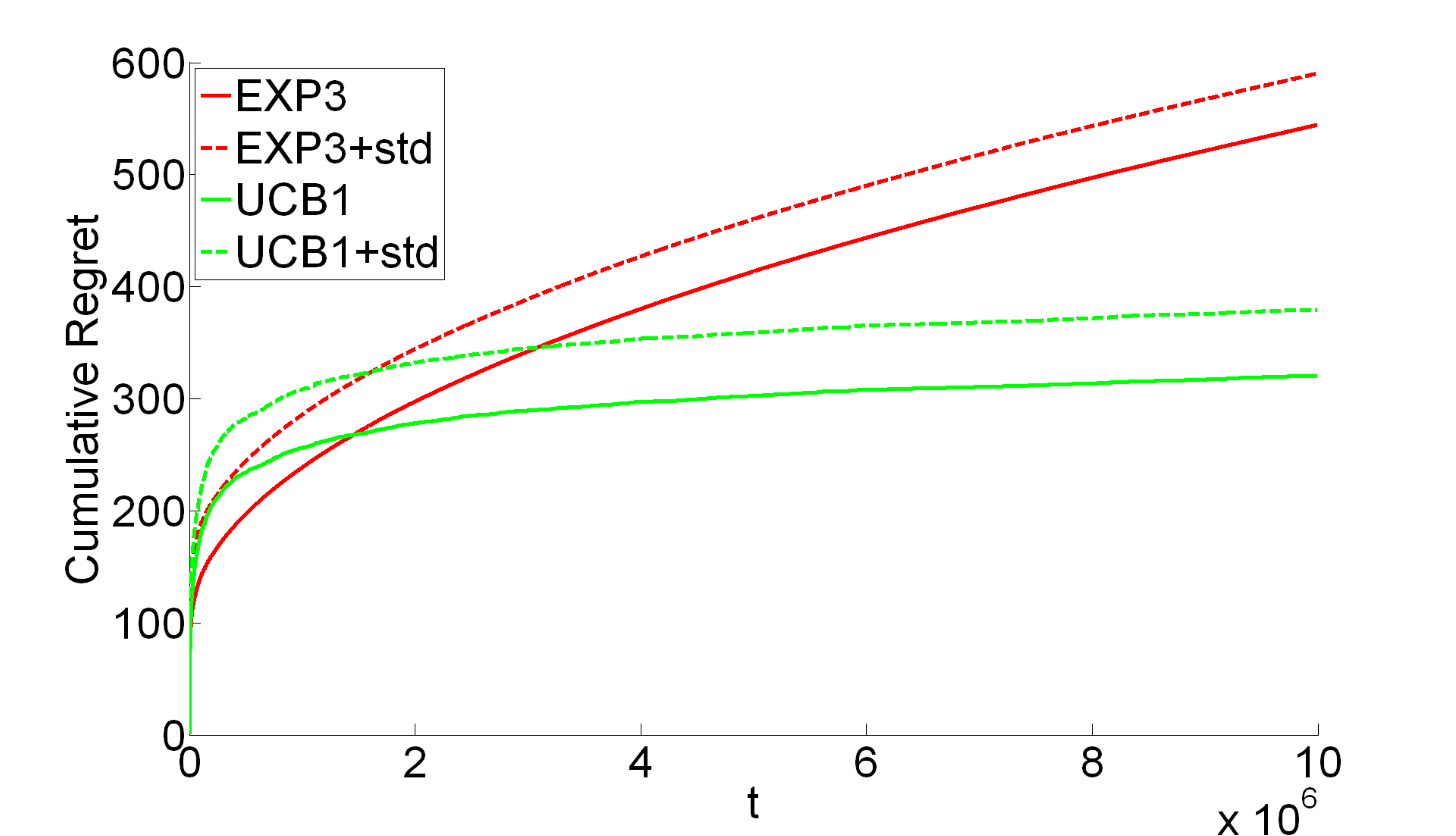}}%
\quad
\subfigure[$\frac{1}{2Kt} ~\cdot$ (Cumulative Variance), $10^7$ rounds]{\includegraphics[width=.49\columnwidth]{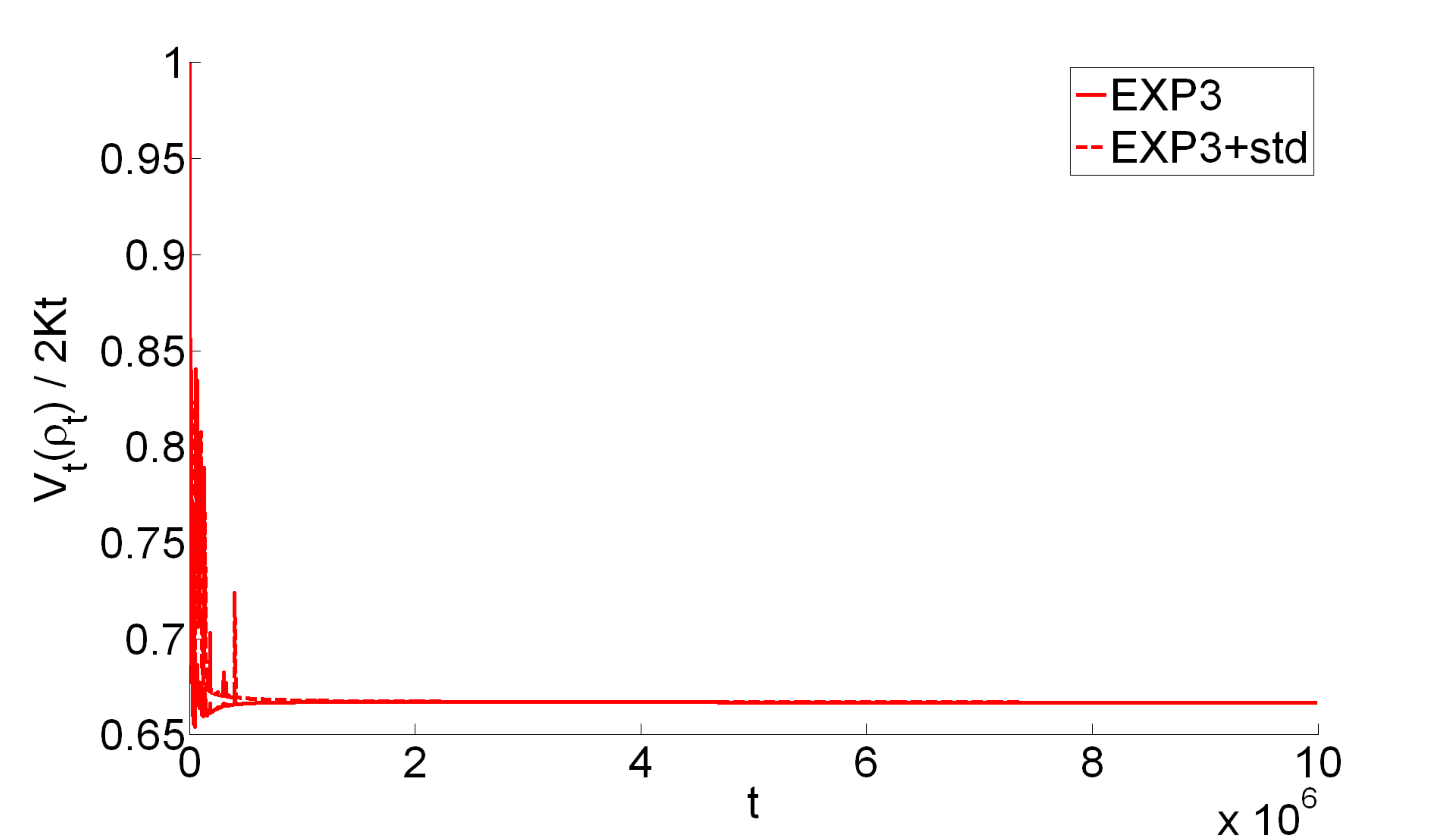}}%
\caption{{\bf Experimental results.} Solid lines show mean values over experiment repetitions, dotted lines show mean values plus one standard deviation (std).}%
\label{fig:exp}%
\end{figure}

\subsubsection*{Observations}

\begin{enumerate}

\item In the stochastic setting the performance of EXP3 is significantly superior to the performance of EXP3.P.1.

\item In the stochastic setting, the performance of EXP3 is comparable or even superior to the performance of UCB1 in the short run, but becomes worse than the performance of UCB1 in the long run (beyond $2 \cdot 10^6$ iterations). The reason is that the number of pulls of the suboptimal arm are roughly $\sqrt t$ for EXP3 and $\ln(t) / \Delta(a)^2$ for UCB. In our experiment $\Delta(a) = 0.1$ for the suboptimal arm, thus  $\sqrt{t} > \ln(t) / \Delta(a)^2$ when $t > \ln(t)^2 / \Delta(a)^4$, which holds when $t > 2 \cdot 10^6$.

\item In the stochastic setting, the variance of EXP3 is initially higher than the variance of EXP3.P.1, but eventually it becomes lower.

\item Initially the variance of EXP3 is just slightly above $2 K t$ (by a factor of less than 2) and eventually it stabilizes around $0.66 \cdot 2 K t$ for the problem that we considered.

\end{enumerate}

\section{Discussion}
\label{sec:dis}

We presented a new framework for data-dependent analysis of the exploration-exploita\-tion trade-off and for simultaneous analysis of model order selection and the exploration-exploitation trade-off. We note that model order selection does not come up in the multiarmed bandit problem due to simplicity of the structure of this problem. Nevertheless, the multiarmed bandit problem is a convenient playground for the development of the new tool. In the follow-up paper we show that the new technique developed here can be applied to multiarmed bandits with side information and yield an advantage over state-of-the-art \cite{SAL+11}.

An important direction for future research is to tighten Theorems \ref{thm:PAC-Bayes-Delta} and \ref{thm:regret-delta}, so that the regret bound will match state-of-the-art regret bounds obtained by alternative techniques. We believe that the ideas described in Section \ref{sec:exp} can make it possible. The experiments presented in Section \ref{sec:exp} show that empirically in the stochastic setting our algorithm is significantly superior to state-of-the-art algorithms for adversarial bandits and slightly worse than state-of-the-art algorithms for stochastic bandits. Closing the gap with state-of-the-art algorithms for stochastic bandits is another important direction for future research.

Other directions for future research include application of our framework to Markov decision processes \cite{FP10}, active learning \cite{BDL09}, and problems with continuous state and action spaces, such as Gaussian process bandits \cite{SKKS10}.

\appendix

\section{Proofs}

In this appendix we provide the proofs of Theorems \ref{thm:PAC-Bayes-Bernstein} and \ref{thm:regret-delta} and Lemma \ref{lem:V}.

\subsection{Proof of Theorem \ref{thm:PAC-Bayes-Bernstein}}

The proof of Theorem \ref{thm:PAC-Bayes-Bernstein} relies on the following two lemmas. The first one is a Bernstein-type inequality. For a proof of Lemma \ref{lem:Bernstein} see, for example, the proof of Theorem 1 in \citet{BLL+11}.
\begin{lemma}[Bernstein's inequality]
\label{lem:Bernstein}
Let $X_1,\dots,X_t$ be a martingale difference sequence $($meaning that $\mathbb E [X_\tau | X_1,\dots,X_{\tau-1}] = 0$ for all $\tau$$)$, such that $X_\tau \leq C$ for all $\tau$ with probability 1. Let $M_t = \sum_{\tau = 1}^t X_\tau$ be a corresponding martingale and $V_t = \sum_{\tau = 1}^t \mathbb E [X_\tau^2 | X_1,\dots,X_{\tau - 1}]$ be the cumulative variance of this martingale. Then for any fixed $\lambda \in [0,\frac{1}{C}]$$:$
\[
\mathbb E e^{\lambda M_t - (e - 2) \lambda^2 V_t} \leq 1.
\]
\end{lemma}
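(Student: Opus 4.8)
The plan is to reduce the exponential-moment bound to a one-step conditional estimate and then telescope. The target quantity $\mathbb E\, e^{\lambda M_t - (e-2)\lambda^2 V_t}$ is the expectation of the terminal value of the process $Z_s = e^{\lambda M_s - (e-2)\lambda^2 V_s}$ (with $Z_0 = 1$), so it suffices to show that $Z_s$ is a supermartingale with respect to the filtration generated by $X_1,\dots,X_s$; then $\mathbb E Z_t \le \mathbb E Z_{t-1} \le \cdots \le Z_0 = 1$ follows immediately from the tower rule.

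First I would factor the one-step increment. Writing $Z_\tau = Z_{\tau-1}\,\exp\!\bigl(\lambda X_\tau - (e-2)\lambda^2\, \mathbb E[X_\tau^2 \mid X_1,\dots,X_{\tau-1}]\bigr)$ and noting that $Z_{\tau-1}$ and the conditional-variance term are both measurable with respect to the past, taking $\mathbb E[\,\cdot \mid X_1,\dots,X_{\tau-1}]$ reduces the supermartingale property to the single inequality $\mathbb E\bigl[e^{\lambda X_\tau}\mid X_1,\dots,X_{\tau-1}\bigr] \le e^{(e-2)\lambda^2\, \mathbb E[X_\tau^2 \mid X_1,\dots,X_{\tau-1}]}$.

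The key analytic step is the pointwise inequality $e^x \le 1 + x + (e-2)x^2$, valid for all $x \le 1$. Because $X_\tau \le C$ and $\lambda \le 1/C$, the argument $x = \lambda X_\tau \le \lambda C \le 1$ always lies in the admissible range, so $e^{\lambda X_\tau} \le 1 + \lambda X_\tau + (e-2)\lambda^2 X_\tau^2$. Taking the conditional expectation and using the martingale-difference property $\mathbb E[X_\tau \mid X_1,\dots,X_{\tau-1}] = 0$ kills the linear term and yields $\mathbb E[e^{\lambda X_\tau}\mid \cdots] \le 1 + (e-2)\lambda^2\, \mathbb E[X_\tau^2 \mid \cdots]$. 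Finally $1 + y \le e^y$ converts the right-hand side into the desired exponential, establishing the one-step bound and hence the supermartingale property.

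I expect the only genuine obstacle to be verifying the pointwise inequality with the sharp constant $e-2$, i.e.\ that $g(x) = 1 + x + (e-2)x^2 - e^x \ge 0$ for all $x \le 1$. This can be checked by noting $g(0) = g(1) = 0$ and analysing the sign of $g'$ through $g''(x) = 2(e-2) - e^x$, which changes sign exactly once on $(-\infty,1]$; a short monotonicity argument then shows that the minimum of $g$ on $(-\infty,1]$ equals $0$, attained at $x=0$ and $x=1$. Everything after that — the one-step conditioning, the application of $1+y \le e^y$, and the telescoping — is routine.
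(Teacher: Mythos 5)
Your proof is correct and follows essentially the same route as the paper's: the paper does not prove this lemma inline but defers to the proof of Theorem~1 in \citet{BLL+11}, which is exactly your argument --- the pointwise bound $e^x \leq 1 + x + (e-2)x^2$ for $x \leq 1$ (applicable since $\lambda X_\tau \leq \lambda C \leq 1$), the martingale-difference property to kill the linear term, $1+y \leq e^y$ to re-exponentiate, and telescoping of the resulting supermartingale $Z_s = e^{\lambda M_s - (e-2)\lambda^2 V_s}$. Your sketch of the verification of the pointwise inequality (via $g(0)=g(1)=0$, $g'(0)=0$, $g'(1)=e-3<0$, and the single sign change of $g''(x) = 2(e-2)-e^x$ on $(-\infty,1]$) is also sound.
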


The second lemma originates in statistical physics and information theory \cite{DV75,DE97,Gra11} and forms the basis of PAC-Bayesian analysis. See \cite{Ban06} for a proof.
\begin{lemma}[Change of measure inequality]
\label{lem:PAC-Bayes}
For any measurable function $\phi(h)$ on ${\cal H}$ and any distributions $\mu(h)$ and $\rho(h)$ on ${\cal H}$, we have$:$
\[
\mathbb E_{\rho(h)}[\phi(h)] \leq KL(\rho\|\mu) + \ln \mathbb E_{\mu(h)}[ e^{\phi(h)}].
\]
\end{lemma}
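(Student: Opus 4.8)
The plan is to prove this via the Gibbs variational principle: introduce the exponentially tilted ``Gibbs'' distribution induced by $\phi$ relative to $\mu$, and then exploit the nonnegativity of the KL divergence between $\rho$ and this tilted distribution.

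First I would dispose of the degenerate cases. If $\rho$ is not absolutely continuous with respect to $\mu$, then $KL(\rho\|\mu) = +\infty$ and the inequality holds trivially; likewise, if $\mathbb E_{\mu(h)}[e^{\phi(h)}] = +\infty$ the right-hand side is $+\infty$ and there is nothing to prove. So I may assume $\rho \ll \mu$ and that the normalizing constant $Z = \mathbb E_{\mu(h)}[e^{\phi(h)}]$ is finite and positive.

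Next I would define the tilted distribution $g$ by its density $g(h) = \mu(h)\, e^{\phi(h)} / Z$ with respect to the same base measure, so that $g$ is a genuine probability distribution (this is exactly where finiteness of $Z$ is used). The crux of the argument is then a direct expansion of $KL(\rho\|g)$: writing $\ln\big(\rho(h)/g(h)\big) = \ln\big(\rho(h)/\mu(h)\big) - \phi(h) + \ln Z$ and taking $\mathbb E_{\rho(h)}$ of both sides gives
\[
KL(\rho\|g) = KL(\rho\|\mu) - \mathbb E_{\rho(h)}[\phi(h)] + \ln Z.
\]
Because the KL divergence between any two distributions is nonnegative (Gibbs' inequality, itself a one-line consequence of Jensen applied to the convex function $-\ln$), the left-hand side is $\geq 0$, and rearranging yields exactly $\mathbb E_{\rho(h)}[\phi(h)] \leq KL(\rho\|\mu) + \ln Z$, which is the claim.

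I do not expect any genuine obstacle here; the only point requiring care is the measure-theoretic bookkeeping, namely ensuring all three expectations are well defined and that the decomposition of $\ln(\rho/g)$ is integrable under $\rho$. These are handled by the reductions above, so the substantive content is the single identity for $KL(\rho\|g)$ together with its nonnegativity. Equivalently, one could bypass the Gibbs measure and apply Jensen's inequality directly to $\mathbb E_{\rho(h)}\big[\ln\big(\mu(h)\, e^{\phi(h)}/\rho(h)\big)\big]$, using the change of measure $\mathbb E_{\rho(h)}\big[(\mu(h)/\rho(h))\, f(h)\big] = \mathbb E_{\mu(h)}[f(h)]$; both routes ultimately reduce to concavity of the logarithm.
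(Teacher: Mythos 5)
Your proof is correct. The paper does not prove this lemma in-line but defers to the cited reference (Banerjee, 2006), and your argument --- introducing the tilted Gibbs distribution $g(h) = \mu(h)e^{\phi(h)}/Z$, expanding $KL(\rho\|g) = KL(\rho\|\mu) - \mathbb E_{\rho(h)}[\phi(h)] + \ln Z$, and invoking nonnegativity of KL --- is exactly the standard proof given there (sometimes called the compensation identity), with the degenerate cases $\rho \not\ll \mu$ and $Z = +\infty$ correctly dispatched up front; your alternative route via Jensen applied to $\mathbb E_{\rho(h)}[\ln(\mu(h)e^{\phi(h)}/\rho(h))]$ is the same argument in disguise, as you note.
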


Now we are ready to state the proof of Theorem \ref{thm:PAC-Bayes-Bernstein}.
\begin{proof} {\bf of Theorem \ref{thm:PAC-Bayes-Bernstein}} 
Take $\phi(h) = \lambda_t M_t(h) - (e-2) \lambda_t^2 V_t(h)$ and $\delta_t = \frac{1}{t(t+1)} \delta \geq \frac{1}{(t+1)^2}\delta$. (It is well-known that $\sum_{t=1}^\infty \frac{1}{t(t+1)} = \sum_{t=1}^\infty \left (\frac{1}{t} - \frac{1}{t+1}\right ) = 1$.) Then the following holds for all $\rho_t$ and $t$ simultaneously with probability greater than $1 - \frac{\delta}{2}$:
\begin{align}
\lambda_t M_t(\rho_t) - (e-&2)\lambda_t^2 V_t(\rho_t)=\mathbb E_{\rho_t(h)} [\lambda_t M_t(h) - (e-2) \lambda_t^2 V_t(h)]\label{eq:1}\\
&\leq KL(\rho_t\|\mu_t) + \ln \mathbb E_{\mu_t(h)} [e^{\lambda_t M_t(h) - (e-2) \lambda_t^2 V_t(h)}]\label{eq:2}\\
&\leq KL(\rho_t\|\mu_t) + 2 \ln(t+1) + \ln \frac{2}{\delta}
+ \ln \mathbb E_{{\cal T}_t} \mathbb E_{\mu_t(h)} [e^{\lambda_t M_t(h) - (e-2) \lambda_t^2 V_t(h)}]\label{eq:3}\\
&= KL(\rho_t\|\mu_t) + 2 \ln(t+1) + \ln \frac{2}{\delta}
+ \ln \mathbb E_{\mu_t(h)} \mathbb E_{{\cal T}_t} [e^{\lambda_t M_t(h) - (e-2) \lambda_t^2 V_t(h)}]\label{eq:4}\\
&\leq KL(\rho_t\|\mu_t) + 2 \ln(t+1) + \ln \frac{2}{\delta},\label{eq:5}
\end{align}
where \eqref{eq:1} is by definition of $M_t(\rho_t)$ and $V_t(\rho_t)$, \eqref{eq:2} is  by Lemma \ref{lem:PAC-Bayes}, \eqref{eq:3} holds with probability greater than $1-\frac{\delta}{2}$ by Markov's inequality and a union bound over $t$, \eqref{eq:4} is due to the fact that $\mu_t$ is independent of ${\cal T}_t$, and \eqref{eq:5} is by Lemma  \ref{lem:Bernstein}.

By applying the same argument to martingales $-M_t(h)$ and taking a union bound over the two we obtain that with probability greater than $1-\delta$:
\[
|M_t(\rho_t)| \leq \frac{KL(\rho_t\|\mu_t) + 2 \ln(t+1) + \ln \frac{2}{\delta}}{\lambda_t} + (e-2) \lambda_t V_t(\rho_t),
\]
which is the statement of the theorem. The technical condition \eqref{eq:technical} follows from the requirement that $\lambda_t \in [0,\frac{1}{C_t}]$.
\end{proof}

\subsection{Proof of Lemma \ref{lem:V}}

\begin{proof} {\bf of Lemma \ref{lem:V}}
\begin{align}
V_t(a) &= \sum_{\tau=1}^t \mathbb E[([R_\tau^{a^*} - R_\tau^a] - [R(a^*) - R(a)])^2 | {\cal T}_{\tau-1}]\notag\\
&= \left (\sum_{\tau=1}^t \mathbb E[(R_\tau^{a^*} - R_\tau^a)^2 | {\cal T}_{\tau-1}]\right ) - t\Delta(a)^2 \label{eq:61}\\
&\leq \left (\sum_{\tau=1}^t \left (\frac{\pi_\tau(a)}{\pi_\tau(a)^2} + \frac{\pi_\tau(a^*)}{\pi_\tau(a^*)^2} \right ) \right )\label{eq:6}\\
&= \left (\sum_{\tau=1}^t \left (\frac{1}{\pi_\tau(a)} + \frac{1}{\pi_\tau(a^*)} \right ) \right )\notag\\
&\leq \frac{2t}{\varepsilon_t},\label{eq:66}
\end{align}
where \eqref{eq:61} is due to the fact that $\mathbb E[R_\tau^a | {\cal T}_{\tau -1}] = R(a)$, \eqref{eq:6} is due to the fact that $R_t \leq 1$ and $t\Delta(a)^2 \geq 0$, and \eqref{eq:66} is due to the fact that $\frac{1}{\pi_\tau(a)} \leq \frac{1}{\varepsilon_t}$ for all $a$ and $1 \leq \tau \leq t$.
\end{proof}

\subsection{Proof of Theorem \ref{thm:regret-delta}}

\begin{proof} {\bf of Theorem \ref{thm:regret-delta}}
We use the following regret decomposition:
\begin{equation}
\Delta(\tilde \rho_t^{_{exp}}) = [\Delta(\rho_t^{_{exp}}) - \hat \Delta_t(\rho_t^{_{exp}})] + \hat \Delta_t(\rho_t^{_{exp}}) + [R(\rho_t^{_{exp}}) - R(\tilde \rho_t^{_{exp}})].\label{eq:regret-delta}
\end{equation}

The first term in the decomposition is bounded by Theorem \ref{thm:PAC-Bayes-Delta}. Before bounding the middle term in \eqref{eq:regret-delta} we bound the last term, which is much simpler, and then return to the middle term. The bound on $[R(\rho_t^{_{exp}}) - R(\tilde \rho_t^{_{exp}})]$ is achieved by the following lemma.
\begin{lemma}
\label{lem:eps}
Let $\tilde \rho$ be an $\varepsilon$-smoothed version of $\rho$, such that
\[
\tilde \rho(a) = (1 - K \varepsilon) \rho(a) + \varepsilon.
\]
Then
\begin{equation}
R(\rho) - R(\tilde \rho) \leq K \varepsilon.
\label{eq:tilde}
\end{equation}
\end{lemma}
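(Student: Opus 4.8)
The plan is to prove the bound by direct substitution of the definition of $\tilde\rho$ and then to invoke the fact that the rewards lie in $[0,1]$. First I would expand $R(\tilde\rho) = \mathbb E_{\tilde\rho(a)}[R(a)] = \sum_a \tilde\rho(a) R(a)$ using $\tilde\rho(a) = (1-K\varepsilon)\rho(a) + \varepsilon$. By linearity,
\[
R(\tilde\rho) = (1-K\varepsilon)\sum_a \rho(a) R(a) + \varepsilon \sum_a R(a) = (1-K\varepsilon) R(\rho) + \varepsilon \sum_a R(a).
\]

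Subtracting this from $R(\rho) = \mathbb E_{\rho(a)}[R(a)]$ yields the clean identity
\[
R(\rho) - R(\tilde\rho) = K\varepsilon\, R(\rho) - \varepsilon \sum_a R(a) = \varepsilon\Bigl(K R(\rho) - \sum_a R(a)\Bigr).
\]
The second step is to bound this quantity using the boundedness of the rewards. Since $R_t \le 1$ implies $R(a) \in [0,1]$ for every $a$, the mixture $R(\rho) = \sum_a \rho(a) R(a)$ satisfies $R(\rho) \le 1$, so $K R(\rho) \le K$; and because each $R(a) \ge 0$ the subtracted sum is nonnegative, $\sum_a R(a) \ge 0$. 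Combining these two observations gives $K R(\rho) - \sum_a R(a) \le K$, and therefore $R(\rho) - R(\tilde\rho) \le K\varepsilon$, which is exactly \eqref{eq:tilde}.

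There is essentially no obstacle here: the statement is an elementary consequence of the affine form of the $\varepsilon$-smoothing together with the boundedness of the rewards, and the computation above is the entire argument. The only point worth a moment's care is to confirm that $\tilde\rho$ is a genuine probability distribution, so that $R(\tilde\rho)$ is well defined; this requires $1 - K\varepsilon \ge 0$, i.e.\ $\varepsilon \le 1/K$, which holds in the regime where the lemma is applied since $\varepsilon_t$ is chosen so that $K\varepsilon_t \le 1$. As a sanity check one can also rewrite the identity as $R(\rho) - R(\tilde\rho) = \varepsilon \sum_a \bigl(R(\rho) - R(a)\bigr)$, where each summand is at most $R(\rho) \le 1$, making the factor $K$ transparent.
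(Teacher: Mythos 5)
Your proof is correct, but it takes a slightly different route from the paper's. You expand $R(\tilde\rho)$ exactly by linearity, obtaining the identity $R(\rho)-R(\tilde\rho)=\varepsilon\bigl(K R(\rho)-\sum_a R(a)\bigr)$, and only then invoke $R(a)\in[0,1]$ to conclude. The paper instead passes immediately to an inequality: it bounds $\sum_a(\rho(a)-\tilde\rho(a))R(a)\leq \frac{1}{2}\sum_a|\rho(a)-\tilde\rho(a)|$ (the standard total-variation bound, valid since $R(a)\in[0,1]$ and both measures have total mass one) and then computes $\frac{1}{2}\sum_a|K\varepsilon\rho(a)-\varepsilon|\leq K\varepsilon$. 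Your approach is more elementary and keeps the exact expression until the last step, which makes the source of slack transparent (the bound is nearly attained when $\rho$ concentrates on a single arm with reward $1$ and all other arms have reward $0$, giving $(K-1)\varepsilon$); the paper's approach is a generic template that would bound $R(\rho)-R(\tilde\rho)$ for \emph{any} perturbation $\tilde\rho$ once its $\ell_1$ distance from $\rho$ is controlled, not just the specific affine smoothing. One small remark: your side condition $\varepsilon\leq 1/K$ is not actually needed for the inequality itself --- both your identity and the final bound hold as algebraic facts for any $\varepsilon\geq 0$, since $\sum_a\tilde\rho(a)=1$ automatically; the condition only matters for interpreting $\tilde\rho$ as a genuine sampling distribution, which is indeed how it is used in Theorem \ref{thm:regret-delta}.
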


\begin{proof}
\begin{align}
R(\rho) - R(\tilde \rho) &= \sum_a (\rho(a) - \tilde \rho(a)) R(a)\notag\\
&\leq \frac{1}{2} \sum_a |\rho(a) - \tilde \rho(a)|\label{eq:Rl1}\\
&= \frac{1}{2} \sum_a |\rho(a) - (1 - K \varepsilon) \rho(a) - \varepsilon|\notag\\
&= \frac{1}{2} \sum_a |K \varepsilon \rho(a) - \varepsilon|\notag\\
&\leq \frac{1}{2} K \varepsilon \sum_a \rho(a) + \frac{1}{2} K \varepsilon\notag\\
&= K \varepsilon.\notag
\end{align}
In \eqref{eq:Rl1} we used the fact that $0 \leq R(a) \leq 1$ and $\rho$ and $\tilde \rho$ are probability distributions.
\end{proof}

In the next lemma we bound $\hat \Delta(\rho_t^{_{exp}})$.
\begin{lemma}
\label{lem:delta}
\begin{equation}
\hat \Delta(\rho_t^{_{exp}}) \leq \frac{\ln K}{\gamma_t}.
\label{eq:delta-hat}
\end{equation}
\end{lemma}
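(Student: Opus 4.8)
The plan is to exploit the explicit Gibbs form $\rho_t^{_{exp}}(a) \propto e^{\gamma_t \hat R_t(a)}$, for which the gap between the empirical reward of the best action and the $\rho_t^{_{exp}}$-averaged empirical reward is controlled by the Shannon entropy of $\rho_t^{_{exp}}$. First I would unfold the definition of empirical regret, writing $\hat\Delta_t(\rho_t^{_{exp}}) = \hat R_t(a^*) - \mathbb E_{\rho_t^{_{exp}}(a)}[\hat R_t(a)]$. Since $a^*$ is the action with the highest \emph{expected} reward and need not maximize the \emph{empirical} reward, I would relax $\hat R_t(a^*) \leq \max_a \hat R_t(a)$, reducing the claim to the purely algebraic statement $\max_a \hat R_t(a) - \mathbb E_{\rho_t^{_{exp}}(a)}[\hat R_t(a)] \leq \frac{\ln K}{\gamma_t}$.

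For the algebraic core I would use the identity $e^{\gamma_t \hat R_t(a)} = Z(\rho_t^{_{exp}})\,\rho_t^{_{exp}}(a)$, which follows directly from the definition of $\rho_t^{_{exp}}$ and its normalizer. Taking logarithms and averaging under $\rho_t^{_{exp}}$ gives $\gamma_t \, \mathbb E_{\rho_t^{_{exp}}(a)}[\hat R_t(a)] = \ln Z(\rho_t^{_{exp}}) + \sum_a \rho_t^{_{exp}}(a)\ln \rho_t^{_{exp}}(a) = \ln Z(\rho_t^{_{exp}}) - H(\rho_t^{_{exp}})$, where $H$ denotes Shannon entropy. I would then invoke the two elementary bounds $\ln Z(\rho_t^{_{exp}}) = \ln \sum_a e^{\gamma_t \hat R_t(a)} \geq \gamma_t \max_a \hat R_t(a)$ (keeping only the largest summand) and $H(\rho_t^{_{exp}}) \leq \ln K$ (entropy of a distribution on $K$ atoms). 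Combining, $\gamma_t \, \mathbb E_{\rho_t^{_{exp}}(a)}[\hat R_t(a)] \geq \gamma_t \max_a \hat R_t(a) - \ln K$; dividing by $\gamma_t$ and rearranging yields exactly \eqref{eq:delta-hat}.

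I do not expect a genuine obstacle here, as everything reduces to standard log-sum-exp and maximum-entropy facts. The one point deserving care is the passage from $a^*$ to the empirical maximizer: one must verify that relaxing $\hat R_t(a^*)$ upward to $\max_a \hat R_t(a)$ only weakens the target inequality rather than accidentally reversing it. An alternative, equivalent route would be the variational (Donsker--Varadhan) characterization already available as Lemma \ref{lem:PAC-Bayes}: with a uniform prior $\mu(a)=1/K$ one has $\ln \mathbb E_{\mu(a)}[e^{\gamma_t \hat R_t(a)}] = \gamma_t\,\mathbb E_{\rho_t^{_{exp}}(a)}[\hat R_t(a)] - KL(\rho_t^{_{exp}}\|\mu)$, and dropping the nonnegative $KL$ term together with $\ln\mathbb E_{\mu(a)}[e^{\gamma_t\hat R_t(a)}] \geq \gamma_t\max_a \hat R_t(a) - \ln K$ gives the same conclusion; I would present whichever derivation reads more cleanly alongside the paper's existing lemmas.
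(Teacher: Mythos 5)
Your proof is correct, but it takes a genuinely different route from the paper's. The paper also exploits the Gibbs form of $\rho_t^{_{exp}}$, but in a different way: it multiplies numerator and denominator by $e^{-\gamma_t \hat R_t(a^*)}$ to write $\rho_t^{_{exp}}(a) = e^{-\gamma_t \hat \Delta_t(a)}/Z'(\rho_t^{_{exp}})$, and then reduces the claim (using $\hat\Delta_t(a^*)=0$) to the standalone analytic Lemma \ref{lem:expsum}: for $x_1=0$ and arbitrary $x_2,\dots,x_n$, one has $\left(\sum_i x_i e^{-\alpha x_i}\right)/\left(\sum_j e^{-\alpha x_j}\right) \leq \ln(n)/\alpha$, which is in turn proved by an extremal/symmetry reduction, the change of variables $y = e^{-\alpha x}$, and a Taylor expansion of $\ln z$ around $z = n$. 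Your argument replaces all of that with the log-partition identity $\gamma_t\,\mathbb E_{\rho_t^{_{exp}}(a)}[\hat R_t(a)] = \ln Z(\rho_t^{_{exp}}) - H(\rho_t^{_{exp}})$ combined with $\ln Z(\rho_t^{_{exp}}) \geq \gamma_t \max_a \hat R_t(a)$ and $H(\rho_t^{_{exp}}) \leq \ln K$; equivalently, as you note, it is the equality case of the change-of-measure inequality (Lemma \ref{lem:PAC-Bayes}) for the Gibbs posterior under a uniform prior, with the nonnegative $KL$ term dropped. Both routes are valid, and your relaxation from $\hat R_t(a^*)$ to $\max_a \hat R_t(a)$ goes in the right direction, since it only increases the quantity being bounded. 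Your route buys three things: it is more elementary and shorter; it sidesteps the delicate step in the paper's Lemma \ref{lem:expsum} where the claim that the maximum is attained with all $x_i$-s equal is asserted by symmetry rather than proved; and it actually establishes the slightly stronger bound against the empirical maximizer $\max_a \hat R_t(a)$ rather than against $\hat R_t(a^*)$, and generalizes immediately to non-uniform priors via the $KL$ formulation, which fits the PAC-Bayesian machinery the paper already has in place. What the paper's route buys is a self-contained, distribution-free statement about exponentially weighted averages (Lemma \ref{lem:expsum}) that can be reused as an independent fact, at the cost of a longer and less transparent derivation.
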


\begin{proof}
Observe that by multiplying nominator and denominator in the definition of $\rho_t^{_{exp}}$ by $e^{-\gamma_t \hat R_t(a^*)}$ we obtain:
\[
\rho_t^{_{exp}}(a) = \frac{e^{\gamma_t \hat R_t(a)}}{Z(\rho_t^{_{exp}})} = \frac{e^{-\gamma_t \hat \Delta_t(a)}}{Z'(\rho_t^{_{exp}})},
\]
where $Z'(\rho_t^{_{exp}}) = \sum_a e^{-\gamma_t \hat \Delta_t(a)}$. The empirical regret $\hat \Delta_t(\rho_t^{_{exp}})$ then obtains the form:
\[
\hat \Delta_t(\rho_t^{_{exp}}) = \sum_a \rho_t(a) \hat \Delta_t(a) = \frac{\sum_a \hat \Delta_t(a) e^{-\gamma_t \hat \Delta_t(a)}}{\sum_a e^{-\gamma_t \hat \Delta_t(a)}}.
\]
The lemma follows from Lemma \ref{lem:expsum} below and the observation that $\hat \Delta_t(a^*) = 0$.
\end{proof}

\begin{lemma}
\label{lem:expsum}
Let $x_1 = 0$ and $x_2,\dots,x_n$ be $n-1$ arbitrary numbers. For any $\alpha > 0$ and $n \geq 2$$:$
\begin{equation}
\frac{\sum_{i=1}^n x_i e^{-\alpha x_i}}{\sum_{j=1}^n e^{-\alpha x_j}} \leq \frac{\ln(n)}{\alpha}.
\label{eq:expsum}
\end{equation}
\end{lemma}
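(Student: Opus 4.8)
The plan is to recognize the left-hand side of \eqref{eq:expsum} as the expectation of the values $x_i$ under the Gibbs (Boltzmann) distribution $p_i = e^{-\alpha x_i}/Z$, where $Z = \sum_j e^{-\alpha x_j}$, and then to exploit the fact that this distribution rigidly ties together the $x_i$, the normalizer $Z$, and the Shannon entropy of $p$. With this viewpoint the quantity to be bounded is exactly $\sum_i p_i x_i = \mathbb E_{p}[x]$, and the whole lemma reduces to a short identity plus two textbook inequalities.

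First I would take logarithms in the definition $p_i = e^{-\alpha x_i}/Z$, which gives $\alpha x_i = -\ln p_i - \ln Z$, i.e. $x_i = -\tfrac{1}{\alpha}(\ln p_i + \ln Z)$. Substituting this back into the Gibbs average and using $\sum_i p_i = 1$ yields
\[
\sum_i p_i x_i = -\frac{1}{\alpha}\sum_i p_i \ln p_i - \frac{\ln Z}{\alpha} = \frac{1}{\alpha}\bigl(H(p) - \ln Z\bigr),
\]
where $H(p) = -\sum_i p_i \ln p_i$ denotes the entropy of $p$. This identity is the only real content of the proof; once it is in place the rest is immediate.

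Finally I would invoke two elementary bounds. The entropy of any distribution supported on $n$ points satisfies $H(p) \le \ln n$. And since $x_1 = 0$, the normalizer obeys $Z = \sum_j e^{-\alpha x_j} \ge e^{-\alpha x_1} = 1$, so $\ln Z \ge 0$. Combining these in the displayed identity gives $\sum_i p_i x_i \le \tfrac{1}{\alpha}\ln n$, which is exactly \eqref{eq:expsum}.

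I do not expect a genuine obstacle here: the main step is simply spotting the Gibbs structure so that the variational identity relating the average, the entropy, and $\ln Z$ can be written down. The hypothesis $x_1 = 0$ is used solely to guarantee $Z \ge 1$ (equivalently $\ln Z \ge 0$); any assumption bounding $\min_i x_i$ from below would serve the same role, and setting it to $0$ merely keeps the resulting constant clean. One should be careful only to note that $\alpha > 0$ makes $p$ a bona fide probability distribution so that both $H(p)\le\ln n$ and $\sum_i p_i = 1$ apply.
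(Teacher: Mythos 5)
Your proof is correct, and it takes a genuinely different route from the paper's. You rewrite the left-hand side as the Gibbs average $\mathbb E_p[x] = \frac{1}{\alpha}\bigl(H(p) - \ln Z\bigr)$ and conclude from the two standard facts $H(p) \le \ln n$ and $\ln Z \ge 0$, the latter being the only place where $x_1 = 0$ enters. The paper argues instead by reduction: it first claims one may assume all $x_i$ positive, then that ``due to symmetry'' the maximum of the ratio is attained when $x_2 = \dots = x_n$, and finally bounds the resulting one-variable function $\frac{(n-1)xe^{-\alpha x}}{1+(n-1)e^{-\alpha x}}$ via the substitution $y = e^{-\alpha x}$ and the tangent-line inequality $\ln z \le \ln n + \frac{z}{n} - 1$. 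Your argument is shorter and, as a piece of mathematics, more airtight: the paper's symmetrization step is asserted rather than proved (symmetry of a function does not by itself place its maximum at the symmetric point), whereas each of your steps is a standard identity or inequality. Your route also makes the role of the hypothesis transparent ($x_1 = 0$ serves only to force $Z \ge 1$) and yields the sharper intermediate bound $\mathbb E_p[x] \le \frac{1}{\alpha}\bigl(\ln n - \ln Z\bigr)$, which degrades gracefully if $\min_i x_i$ is merely bounded below rather than equal to zero. What the paper's approach buys is that it stays within elementary calculus, using no entropy identity, but in a PAC-Bayesian paper that is hardly a constraint, and the variational identity you use is in the same spirit as the change-of-measure lemma the paper already relies on.
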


\begin{proof}
Since negative $x_i$-s only decrease the left hand side of \eqref{eq:expsum} we can assume without loss of generality that all $x_i$-s are positive. Due to symmetry, the maximum is achieved when all $x_i$-s (except $x_1$) are equal:
\begin{equation}
\label{eq:x}
\frac{\sum_{i=1}^n x_i e^{-\alpha x_i}}{\sum_{j=1}^n e^{-\alpha x_j}} \leq \max_x \frac{(n-1) x e^{-\alpha x}}{1 + (n-1) e^{-\alpha x}}.
\end{equation}

We apply change of variables $y = e^{-\alpha x}$, which means that $x = \frac{1}{\alpha}\ln \frac{1}{y}$. By substituting this into the right hand side of \eqref{eq:x} we get
\[
\frac{(n-1) x e^{-\alpha x}}{1 + (n-1) e^{-\alpha x}} = \frac{1}{\alpha} \cdot \frac{(n-1) y \ln \frac{1}{y}}{1 + (n-1)y}.
\]
In order to prove the bound we have to show that $\frac{(n-1) y \ln \frac{1}{y}}{1 + (n-1)y} \leq \ln n$. 

By taking Taylor's expansion of $\ln z$ around $z = n$ we have:
\[
\ln z \leq \ln n + \frac{1}{n} (z - n) = \ln n + \frac{z}{n} - 1.
\]
Thus:
\begin{align}
\frac{(n-1) y \ln \frac{1}{y}}{1 + (n-1)y} &\leq \frac{(n-1) y (\ln n + \frac{1}{n y} - 1)}{1 + (n-1) y}\notag\\
&\leq \frac{y (n-1) \ln n + \frac{n-1}{n}}{(n-1) y + 1}\notag\\
&\leq \frac{(y (n-1) + 1) \ln n}{y(n-1) + 1}\label{eq:ln}\\
&= \ln n,\notag
\end{align}
where \eqref{eq:ln} follows from the fact that $\ln z \leq z - 1$ for any positive $z$, and hence $\ln \frac{1}{n} \leq \frac{1}{n} - 1$, which means that $\ln n \geq 1 - \frac{1}{n} = \frac{n-1}{n}$ for all $n > 0$.
\end{proof}

Substitution of \eqref{eq:PAC-Bayes-Delta-Bernstein}, \eqref{eq:tilde}, \eqref{eq:delta-hat}, and the choice of $\varepsilon_t$ and $\gamma_t$ in theorem formulation into \eqref{eq:regret-delta} concludes the proof.

\end{proof}

\section*{Acknowledgments}

This work was supported in part by the IST Programme of the European Community, under the PASCAL2 Network of Excellence, IST-2007-216886, and by the European Community's Seventh Framework Programme (FP7/2007-2013), under grant agreement $N^o$231495. This publication only reflects the authors' views.

{\small
\bibliography{bibliography}
}

\end{document}